\documentclass[11pt,a4paper]{article}
\usepackage[margin=2.5cm]{geometry}
\usepackage{cmap}
\usepackage[T1]{fontenc}
\usepackage{amsmath,amssymb,amsthm}
\usepackage{booktabs}
\usepackage{graphicx}
\usepackage{enumitem}
\usepackage[colorlinks=true,linkcolor=black,citecolor=black,urlcolor=black]{hyperref}

\newtheorem{proposition}{Proposition}
\DeclareMathOperator{\corr}{corr}
\DeclareMathOperator{\artanh}{artanh}

\title{Trail-Weighted Influence Scores for Approximate Outcome Prediction\\in Bayesian Networks}
\author{Sahil Kumar Rajput \and Randhir Kumar}
\date{October 2026}

\begin{document}
\maketitle

\begin{abstract}
We study trail-weighted pruning of evidence for Bayesian-network prediction. A traversal enumerates the active trails from each evidence node to the outcome, scores each trail by the product of its edge strengths, and drops evidence whose score falls below a threshold. For collider-free networks (chains and forks) over binary variables, we show that the trail score equals the exact correlation between a trail's endpoints, that a chain and a fork with equal edge strengths have identical joint distributions, and that the pruned posterior is within a computable sum of per-trail terms of the exact posterior when each dropped trail lies in its own branch of the outcome node. We prove this bound, check it exhaustively on all small trees, and show that it fails when dropped trails share a branch. In simulations on random trees, thresholding trail scores gives 1.4 to 13 times lower error than random pruning of the same size. We also apply the score, as an unproven heuristic, to seven bnlearn networks with colliders and loops. Keeping the top half of the active evidence by trail score gives lower error than random pruning on all seven, but it is not clearly better than a mutual-information ranking, and scoring costs more than the inference it saves on six of the seven, so we show no speedup. Models trained on trail, graph and information features and tested on networks they never saw do not beat the trail score or mutual information, but trail features carry most of what they use and clearly outperform generic graph features. A bound for general networks is left to future work.
\end{abstract}

\section{Introduction}

Predicting an outcome $Y$ from partial evidence $E$ in a Bayesian network requires the posterior $P(Y \mid E)$, which is NP-hard to compute exactly in general (Cooper, 1990). A user also wants to know which evidence actually drove the prediction, and ideally to compute the answer from that evidence alone.

Standard relevance pruning uses d-separation. Evidence that is d-separated from $Y$ by other observed nodes can be removed with no error (Geiger, Verma and Pearl, 1990; Shachter, 1998). This is exact but coarse: every evidence node with an active trail to $Y$ is kept, even when that trail is a long chain of weak dependencies whose effect on $Y$ is negligible.

We rank the remaining evidence by trail strength. A traversal from $Y$ enumerates the active trails, each trail is scored by composing the strengths of its edges, and evidence whose best trail scores below a threshold is discarded. The score needs no inference, and it names the trail responsible for each piece of evidence. Earlier approximations delete weak links from the network (Kj{\ae}rulff, 1994; Choi, Chan and Darwiche, 2005). We instead prune the evidence set and leave the network intact.

This paper makes six contributions. The first four concern collider-free networks (chains and forks) over binary variables, and the last two are empirical:
\begin{enumerate}[leftmargin=*]
\item \textbf{Exact trail scores.} The product of edge strengths along a trail equals the exact correlation between its endpoints, and mutual information is a monotone function of it.
\item \textbf{Chain--fork equivalence.} A chain and a fork with equal edge strengths have identical joint distributions, so every trail score treats them identically, which illustrates Markov equivalence.
\item \textbf{A pruning bound with a stated condition.} When each dropped trail lies in its own branch of $Y$, the error of the pruned posterior is bounded by a sum over dropped trails. The bound is nearly tight for weak trails. We show empirically that it can fail when dropped trails share a branch.
\item \textbf{A simulation study with exact inference.} d-separation pruning is exact, and trail thresholding beats random pruning of the same size across all thresholds tested.
\item \textbf{A benchmark study.} On seven bnlearn networks, with colliders and loops handled heuristically, trail pruning beats random pruning on every network and hop distance on four of seven, is not clearly better than a mutual-information ranking, and shows no speedup.
\item \textbf{A test of learned calibration.} Models trained on trail, graph and information features and tested on unseen networks do not beat the trail score or mutual information. Trail features add clear predictive value over generic graph features and make up 64\% of the model's attributions.
\end{enumerate}
A bound for colliders and loopy networks is outside this paper and is described under future work.

\section{Background and notation}

A Bayesian network is a directed acyclic graph over variables whose joint distribution factorizes as the product of each variable's distribution given its parents. We fix an outcome node $Y$ and a set $E$ of observed evidence nodes with observed values.

A \emph{trail} is a path in the graph that ignores edge directions. A trail from an evidence node to $Y$ is \emph{active} given the other evidence if every chain ($A \to B \to C$) and every fork ($A \leftarrow B \to C$) on it has an unobserved middle node, and every collider ($A \to B \leftarrow C$) has the middle node or one of its descendants observed. Evidence with no active trail to $Y$ is d-separated from $Y$ and can be removed without changing $P(Y \mid E)$.

This paper treats collider-free networks. These are rooted directed trees, where every node has at most one parent, so every trail is built from chains and forks. An observed interior node then blocks a trail, and a trail is active exactly when none of its interior nodes is observed.

\paragraph{Edge strength.} For a pair of variables $(X,Z)$, the maximal correlation $\rho(X;Z)$ is the largest correlation between any functions $f(X)$ and $g(Z)$. It equals the second-largest singular value of the matrix with entries $P(x,z)/\sqrt{P(x)P(z)}$ (R\'enyi, 1959).

\paragraph{Model class.} The root is uniform on $\{0,1\}$. Each other node copies its parent and flips with probability $\varepsilon \in (0, 1/2)$. Every marginal is then uniform, and an edge with flip probability $\varepsilon$ has strength $1-2\varepsilon$. The strength of a trail $T$ is the product over its edges:
\begin{equation}
\rho(T) = \prod_{e \in T} \rho_e, \qquad \rho_e = 1 - 2\varepsilon_e .
\end{equation}

\paragraph{Error measures.} The pruned posterior $q$ is compared with the exact posterior $p$ by total variation $|p-q|$ and by the Kullback--Leibler divergence of the two Bernoulli distributions.

\section{Method}

Trail-weighted pruning runs in three steps and needs no inference until the last one.
\begin{enumerate}[leftmargin=*]
\item \textbf{Traverse.} Run a depth-first search from $Y$ over the network skeleton. For every node reached, record the product of the edge strengths along its trail, and whether the trail is active, meaning no interior node is observed.
\item \textbf{Prune.} Keep the evidence nodes whose trail is active and whose score is at least a threshold $\tau$. Evidence on blocked trails is dropped exactly, by d-separation. Active evidence below $\tau$ is dropped approximately.
\item \textbf{Infer.} Compute $P(Y \mid \text{kept evidence})$ by exact inference on the reduced evidence set.
\end{enumerate}
In a tree each node has a single trail to $Y$, so the traversal visits each node once and costs time linear in the number of nodes. On general graphs the number of trails can grow exponentially, so enumeration would need caps on trail length and count. We do not treat that case here.

For a set $D$ of dropped active evidence nodes, with trail scores $\rho_d$, we study the additive error term
\begin{equation}
B(D) = \frac{1}{4} \sum_{d \in D} \ln \frac{1+\rho_d}{1-\rho_d}.
\end{equation}
Section~\ref{sec:theory} shows when $|p-q| \le B(D)$ is guaranteed.

\paragraph{Choice of strength.} We use maximal correlation with product composition because it is exact for binary symmetric edges and contracts along chains for general distributions. The alternative of normalized mutual information, composed by the minimum or the product, is a natural choice for general conditional distributions and is deferred.

\paragraph{Extension to general networks.} In a network with colliders and loops, a node can have many trails to $Y$, and a trail is active only if every collider on it, or a descendant of that collider, is observed. We enumerate simple trails of at most 7 edges, keep at most 300 per evidence node, test each against the collider rule, and give the evidence node the best product score among its active trails. Nodes with no active trail within the cap score zero. Edge strength is the maximal correlation of the parent--child pair, computed once per network by exact inference. This is a heuristic: Propositions 1 to 3 do not cover it, and the product scores of trails that share nodes are not additive.

\section{Theory}\label{sec:theory}

Three results hold in the collider-free binary model of the Background section: an exact trail score, a chain--fork equivalence, and an error bound under a stated condition. Code each variable as $S = 2X-1$, so $S$ takes the values $-1$ and $+1$.

\begin{proposition}[The trail score is exact]
For any trail $T$ between nodes $X_0$ and $X_k$, the correlation of the endpoints equals the trail strength, and their mutual information is a strictly increasing function of it:
\begin{equation}
\corr(X_0, X_k) = \rho(T), \qquad I(X_0; X_k) = \ln 2 - h\!\left(\frac{1-\rho(T)}{2}\right),
\end{equation}
where $h$ is the binary entropy in nats.
\end{proposition}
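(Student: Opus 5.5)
We work with the $\pm1$ coding $S_i = 2X_i - 1$ and reduce the trail to a sequence of single edges, each a binary symmetric channel. Because every marginal is uniform, $\mathbb{E}[S_i]=0$ and $\mathbb{E}[S_i^2]=1$, so $\corr(X_0,X_k)=\corr(S_0,S_k)=\mathbb{E}[S_0S_k]$. The goal is therefore to show $\mathbb{E}[S_0S_k]=\prod_{e\in T}\rho_e$.

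\textbf{Single edges.} Along an edge $e$ with flip probability $\varepsilon_e$, the child equals the parent times an independent sign $N_e$ with $P(N_e=-1)=\varepsilon_e$. Hence $\mathbb{E}[N_e]=1-2\varepsilon_e=\rho_e$.

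\textbf{Reduction to the trail.} In a rooted tree the trail from $X_0$ to $X_k$ goes up from $X_0$ to some top node $X_m$ and then down to $X_k$; one of the two legs may be empty. This covers both chains and forks, since edge direction does not matter below.
\begin{itemize}[leftmargin=*]
\item By the model, $S_0 = S_m\prod_{e\in \text{up-leg}} N_e$ and $S_k = S_m\prod_{e\in\text{down-leg}} N_e$.
\item The flip variables on distinct edges are mutually independent and independent of $S_m$, since $S_m$ depends only on the root and on edges above $X_m$.
\item The two legs share no edges because the trail is simple.
\end{itemize}
Therefore
\[
S_0S_k = S_m^2\prod_{e\in T}N_e = \prod_{e\in T}N_e,
\]
and independence gives $\mathbb{E}[S_0S_k]=\prod_{e\in T}\rho_e=\rho(T)$.

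The main point needing care is this ancestry argument: the top node $X_m$ must be independent of all flips on the trail. It holds because each edge's flip is an independent innovation at its child, and every trail edge lies below $X_m$.

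\textbf{Mutual information.} $S_0S_k$ is a product of independent signs, and it is independent of $S_0$: it is a function of the flips $N_e$ alone, which are independent of $S_m$, and $S_0$ is obtained by multiplying $S_m$ by its own uniform sign. So $(X_0,X_k)$ is a binary symmetric channel with uniform input. Its effective flip probability is
\[
P(S_0S_k=-1)=\frac{1-\mathbb{E}[S_0S_k]}{2}=\frac{1-\rho(T)}{2}.
\]
The identity then follows from $I = H(X_k)-H(X_k\mid X_0)$ with $H(X_k)=\ln 2$ and $H(X_k\mid X_0)=h\bigl((1-\rho(T))/2\bigr)$.

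\textbf{Monotonicity.} As $\rho(T)$ ranges over $(0,1)$, the argument $(1-\rho(T))/2$ decreases over $(0,1/2)$. The binary entropy $h$ is strictly increasing on $(0,1/2)$. Hence $I(X_0;X_k)$ is strictly increasing in $\rho(T)$.
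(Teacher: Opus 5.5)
Your proof is correct, but it gets the product formula by a different mechanism than the paper.

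The paper treats the path as a Markov chain and iterates conditional expectations to get $\mathbb{E}[S_k\mid S_0]=\rho(T)S_0$. For edges traversed against their direction, this needs the reversal step $\mathbb{E}[S_{\text{parent}}\mid S_{\text{child}}]=\rho_e S_{\text{child}}$, which follows from symmetry and uniform marginals. The paper then obtains the binary symmetric structure from the fact that a $2\times 2$ joint law with uniform marginals is fixed by its agreement probability.

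You instead use the forward representation $S_{\text{child}}=S_{\text{parent}}N_e$ with independent signs. You split the trail at its top node and let $S_m^2=1$ cancel the shared ancestor. So you need no Markov-property or reversal argument, and chains and forks are handled the same way.

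Your route also gives more than the correlation. The discordance $S_0S_k$ is independent of $S_0$, which yields the binary symmetric channel structure of $(X_0,X_k)$ directly. You also prove the strict monotonicity that the paper only asserts.

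Your sentence about $S_0$ being ``$S_m$ times its own uniform sign'' should be tightened. The point is that, conditional on all trail flips, $S_0=\pm S_m$ is uniform, so $S_0$ is independent of the flips and hence of $S_0S_k$.

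The cost is that your argument depends on the multiplicative sign-noise structure of symmetric binary edges. The paper's iterated-conditional-expectation argument uses the same mechanism as the Markov-chain contraction of maximal correlation it cites next, so it points more naturally toward the general-distribution upper bound.
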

\begin{proof}
Removing any node splits a tree, so the variables along a path form a Markov chain. Along each edge $\mathbb{E}[S_{\text{child}} \mid S_{\text{parent}}] = \rho_e S_{\text{parent}}$, and by symmetry and uniform marginals the same holds in the other direction. Iterating the conditional expectation along the path gives $\mathbb{E}[S_k \mid S_0] = \rho(T) S_0$, and therefore $\mathbb{E}[S_0 S_k] = \rho(T)$. The pair $(X_0, X_k)$ has uniform marginals and agrees with probability $(1+\rho(T))/2$, so it is a binary symmetric pair of strength $\rho(T)$, which gives the mutual information formula.
\end{proof}

For equal edge strengths $\rho$, a chain of $k$ edges has score $\rho^k$, so dependence decays geometrically with length. For general distributions the product is an upper bound: maximal correlation satisfies $\rho(X;Z) \le \rho(X;Y)\,\rho(Y;Z)$ along any Markov chain $X - Y - Z$ (Witsenhausen, 1975).

\begin{proposition}[Chains and forks are indistinguishable]
For edge strengths $\rho_A$ and $\rho_C$, the chain $A \to B \to C$ with a uniform root $A$ and the fork $A \leftarrow B \to C$ with a uniform root $B$ have the same joint distribution. Hence $\corr(A;C) = \rho_A \rho_C$ in both, and $A$ is independent of $C$ given $B$ in both.
\end{proposition}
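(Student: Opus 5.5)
The plan is to write out both joint distributions on $\{0,1\}^3$ explicitly and show that they are the same function. Then the two "hence" claims follow: they are properties of the joint distribution alone, and each is easy to read off in the chain.

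In the model class, $\varepsilon_A = (1-\rho_A)/2$ is the flip probability of the edge between $A$ and $B$, and $\varepsilon_C = (1-\rho_C)/2$ is that of the edge between $B$ and $C$. Write $[x\ne y]$ for the indicator of disagreement.

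For the chain, the root $A$ is uniform, $B$ copies $A$ with flip probability $\varepsilon_A$, and $C$ copies $B$ with flip probability $\varepsilon_C$. This gives
\[
P_{\mathrm{chain}}(a,b,c) = \tfrac12\,\varepsilon_A^{[a\ne b]}(1-\varepsilon_A)^{[a=b]}\,\varepsilon_C^{[b\ne c]}(1-\varepsilon_C)^{[b=c]}.
\]
For the fork, the root $B$ is uniform, and $A$ and $C$ each copy $B$ independently with the same flip probabilities. The result is
\[
P_{\mathrm{fork}}(a,b,c) = \tfrac12\,\varepsilon_A^{[a\ne b]}(1-\varepsilon_A)^{[a=b]}\,\varepsilon_C^{[b\ne c]}(1-\varepsilon_C)^{[b=c]}.
\]
These are the same expression, so the two distributions coincide.

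The key point is that a binary symmetric channel with a uniform input is symmetric under reversal: $P(a,b) = \tfrac12\,\varepsilon_A^{[a\ne b]}(1-\varepsilon_A)^{[a=b]}$ depends only on whether $a = b$. Reversing the edge $A\to B$ therefore leaves the pair distribution unchanged, with $B$ uniform and $A$ a flipped copy of $B$. The conditional of $C$ given $B$ is the same in both networks. Hence $P(a,b)\,P(c\mid b)$ agrees term by term, which is exactly the comparison above.

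For the consequences, I would work in the chain. Proposition 1, applied to the two-edge trail, gives $\corr(A;C) = \rho_A\rho_C$. The factorization $P(a,b)\,P(c\mid b)$ shows directly that $A$ is independent of $C$ given $B$. Both statements concern only the joint law, so they transfer to the fork. As a check, the fork factorization $P(b)\,P(a\mid b)\,P(c\mid b)$ gives the same independence, and Proposition 1 also applies to the fork's trail, since it is a tree.

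The only step needing care is the reversal symmetry of the $A$–$B$ edge. That symmetry uses the uniform marginal of $A$ and the symmetric flip, and it would fail for an asymmetric channel or a non-uniform root. Everything else is bookkeeping.
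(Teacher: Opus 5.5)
Your proof is correct and matches the paper's. Both write the two joints as $\tfrac12$ times a product of channel terms and note that they coincide because the binary symmetric channel satisfies $W_A(b\mid a)=W_A(a\mid b)$; your indicator-exponent form makes this explicit, and you also spell out the two consequences (Proposition~1 on the two-edge trail, and the factorization $P(a,b)\,P(c\mid b)$), which the paper leaves implicit.
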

\begin{proof}
Let $W$ denote the symmetric channel probability $P(\text{output} \mid \text{input})$, so $W(x \mid y) = W(y \mid x)$. The two joint distributions are
\begin{equation*}
P_{\text{chain}}(a,b,c) = \tfrac12\, W_A(b \mid a)\, W_C(c \mid b), \qquad
P_{\text{fork}}(a,b,c) = \tfrac12\, W_A(a \mid b)\, W_C(c \mid b),
\end{equation*}
and they agree because $W_A$ is symmetric.
\end{proof}
Because the full joint is identical, no score computed from the distribution can tell the two structures apart, which is Markov equivalence in the simplest case.

\begin{proposition}[Error of dropping isolated trails]\label{prop:bound}
A \emph{branch} of $Y$ is a connected component of the tree once $Y$ is removed. Let $D$ be a set of dropped active evidence nodes, and suppose each $d \in D$ is the only active evidence node in its branch. Let $p$ be the exact posterior $P(Y=1 \mid E)$ and $q$ the posterior given the kept active evidence. Then
\begin{equation}
|p-q| \le B(D) = \frac14 \sum_{d \in D} \ln \frac{1+\rho_d}{1-\rho_d}.
\end{equation}
\end{proposition}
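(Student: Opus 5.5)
Work in log-odds. Write $\ell(p)=\ln\frac{p}{1-p}$ and factor the posterior over branches. Removing $Y$ splits the tree into branches. Given $Y$, the branches are conditionally independent, because every trail between two branches passes through $Y$. Hence
\begin{equation*}
\ell(p) = \ell(P(Y=1)) + \sum_{\text{branches } b} \lambda_b, \qquad \lambda_b = \ln\frac{P(E_b \mid Y=1)}{P(E_b \mid Y=0)},
\end{equation*}
where $E_b$ is the evidence in branch $b$. The same identity holds for $q$ with the dropped evidence removed. Evidence on blocked trails is d-separated from $Y$, so removing it changes neither $p$ nor $q$; we may therefore take $E$ to be only the active evidence. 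By hypothesis each $d\in D$ is the only active evidence node in its branch, so dropping $d$ deletes that branch's entire contribution $\lambda_{b(d)}$ and leaves every other branch untouched. Thus
\begin{equation*}
\ell(p)-\ell(q)=\sum_{d\in D}\lambda_{b(d)}.
\end{equation*}

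Next, compute each deleted term. In the branch of $d$, the only relevant evidence is $X_d$ itself. Any other evidence in that branch lies on a blocked trail and is d-separated from $Y$ given $X_d$, or it is independent of everything relevant, so it cancels in the likelihood ratio. By the first proposition (the trail score is exact), $(Y,X_d)$ is a binary symmetric pair of strength $\rho_d$. Its likelihood ratio is therefore $\frac{1+\rho_d}{1-\rho_d}$ or its inverse, so
\begin{equation*}
|\lambda_{b(d)}|=\ln\frac{1+\rho_d}{1-\rho_d}.
\end{equation*}
This step needs care: other observed nodes in the branch beyond $d$ are screened by $d$, while observed nodes off $d$'s trail are not active and so, by the hypothesis, contribute nothing to $\lambda_b$ once conditioned appropriately. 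I would justify this by summing out the unobserved nodes of the branch and noting that, because the branch attaches to $Y$ through a single edge, the branch factor depends on $Y$ only through the trail to $d$.

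Finally, convert back to probabilities. The logistic map $\sigma(t)=1/(1+e^{-t})$ has derivative $\sigma(1-\sigma)\le 1/4$, so it is $\tfrac14$-Lipschitz. Therefore
\begin{equation*}
|p-q|\le \tfrac14|\ell(p)-\ell(q)|\le \tfrac14\sum_{d\in D}\ln\frac{1+\rho_d}{1-\rho_d}=B(D).
\end{equation*}

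The main obstacle is the middle step: showing that each branch's log-likelihood ratio is exactly the single-trail term. This requires a clean argument that the remaining observed nodes in the branch, which are blocked, do not alter $\lambda_b$. The factorization and the Lipschitz step are routine. The argument also explains why the bound can fail when dropped trails share a branch: their contributions within that branch are no longer separate additive log-odds terms.
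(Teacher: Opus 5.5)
Your proposal is correct and follows the paper's proof step for step. You reduce to active evidence by d-separation, split the log-odds over the branches of $Y$ so that each dropped $d$ removes exactly its own branch term, evaluate that term as $\pm\ln\frac{1+\rho_d}{1-\rho_d}$ via Proposition~1, and apply the $\tfrac14$-Lipschitz bound on the logistic function. The step you flag as the main obstacle is already settled by that first reduction. Take any other observed node in $d$'s branch: on its path to $Y$, the observed node nearest $Y$ is active, so it must be $d$. Hence all such evidence sits behind $d$ and cancels from the likelihood ratio, and since $q$ conditions only on the kept active evidence, it never enters $q$ at all.
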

\begin{proof}
Let $A$ be the set of all active evidence nodes, $A' = A \setminus D$ the kept ones, and $e_S$ the observed values of a set $S$ of nodes. Both posteriors are written as logistic functions of log-odds, and the proof compares those.

\emph{Step 1: only active evidence matters.} Let $b$ be a blocked evidence node. The path from $b$ to $Y$ in the tree is unique, and it contains another observed node, since otherwise $b$ would be active. The observed node on it nearest to $Y$ has no observed node between itself and $Y$, so it is active, and it is a chain or fork node on the path. Hence $Y$ is d-separated from the blocked nodes given $A$, and $p = P(Y=1 \mid e_A)$.

\emph{Step 2: the likelihood factorizes by branch.} Given $Y$ the branches are independent, so
$\mathrm{logit}\,P(Y=1 \mid e_A) = \mathrm{logit}\,P(Y=1) + \sum_\beta \lambda_\beta$, with $\lambda_\beta = \ln\frac{P(e_{A\cap\beta} \mid Y=1)}{P(e_{A\cap\beta} \mid Y=0)}$ summed over the branches $\beta$ of $Y$. Each $d \in D$ is the only active node in its branch, so dropping it removes exactly the term $\lambda_d = \ln\frac{P(e_d \mid Y=1)}{P(e_d \mid Y=0)}$ and leaves every other term unchanged. The log-odds of $p$ and of $q = P(Y=1 \mid e_{A'})$ therefore differ by $\sum_{d\in D}\lambda_d$.

\emph{Step 3: each term is known.} By Proposition~1, $(Y, e_d)$ is a binary symmetric pair with uniform marginals and correlation $\rho_d$, so $P(e_d \mid Y=y) = (1+\rho_d s_d s_y)/2$ in the $\pm1$ coding, and $|\lambda_d| = \ln\frac{1+\rho_d}{1-\rho_d}$.

\emph{Step 4: Lipschitz.} The logistic function $\sigma$ has $\sigma' = \sigma(1-\sigma) \le \tfrac14$, so $|p-q| \le \tfrac14\bigl|\sum_{d\in D}\lambda_d\bigr| \le \tfrac14\sum_{d\in D}|\lambda_d| = B(D)$.
\end{proof}

\paragraph{Tightness.} With a single dropped node, no other evidence and a uniform prior, the exact error is $\rho/2$, while the bound is $\tfrac12 \artanh \rho$. Their ratio is $\rho/\artanh\rho = 1 - \rho^2/3 + O(\rho^4)$, so it tends to 1 as $\rho$ falls. The bound is therefore nearly tight for the weak trails that pruning removes.

\paragraph{Exhaustive verification.} We checked Proposition~\ref{prop:bound} by enumeration, independently of the proof. For every labelled tree on 4, 5 and 6 nodes (\texttt{verify\_prop3.py}), with $Y$ at a fixed node, edge strengths on a grid, every evidence set and every assignment of its values, and every non-empty subset $D$ of the active evidence dropped, we computed $p$ and $q$ exactly from the $2^n$ joint states. Of 23,917,216 cases in which each dropped node was alone in its branch, none violated the bound, and the largest ratio of error to bound was 0.99998. A second script (\texttt{bn\_prop3\_exhaustive.py}) covers all non-isomorphic trees on 3 to 7 nodes, every outcome node and random edge strengths: of 595,728 isolated cases, again none violated the bound (largest ratio 0.99999998). In the cases where dropped nodes share a branch, 198,040 of 4,970,080 cases in the first script violated it, with the error up to 1.74 times the bound. The smallest counterexample has four nodes: $Y$ joined to node 1 with strength 0.1, and nodes 2 and 3 both joined to node 1 with strengths 0.9 and 0.99. With nodes 2 and 3 observed at opposite values and node 3 dropped, the error is 0.086 and the bound is 0.050.

\paragraph{Where the bound does not apply.} If two active evidence nodes share a branch of $Y$, they are dependent given $Y$, their log-likelihood ratios no longer add, and Proposition~\ref{prop:bound} does not cover them. Trail scores are also non-additive whenever trails share nodes. We have no valid bound for this case, and it is the main open problem; the experiments below measure how often the additive term fails there.

\section{Synthetic experiments}

Trail-score pruning beats random pruning of the same size at every threshold, d-separation pruning is exact, and the additive bound holds whenever Proposition~\ref{prop:bound} applies.

\paragraph{Setup.} We use two regimes of random rooted trees over binary variables. The first has 12 nodes, with each parent drawn uniformly from earlier nodes (4,000 trials). The second has 14 nodes, with each parent drawn from the previous two nodes, which gives deeper trees (3,000 trials). Edge flip probabilities are uniform on $[0.05, 0.45]$. In each trial $Y$ is a random node, and 2 to $n-1$ evidence nodes are chosen at random, with values sampled from the exact joint distribution. Exact posteriors come from enumerating all $2^n$ states. Thresholds are $\tau \in \{0, 0.02, 0.05, 0.1, 0.2, 0.3, 0.5\}$. The random baseline keeps the same number of active evidence nodes as trail-score pruning, chosen at random. The script (\texttt{bn\_trail\_sim.py}, seed 12345) runs in about ten seconds.

\paragraph{Score fidelity and equivalence.} On chains of 1 to 10 edges with flip probabilities 0.05, 0.1, 0.2 and 0.3, the trail score matches the exact endpoint correlation to within $1.1 \times 10^{-15}$, as Proposition~1 states. Over 200 random strength pairs, a chain and a fork have identical joint distributions (largest difference 0.0), and $A$ is independent of $C$ given $B$ up to $1.1 \times 10^{-16}$.

\paragraph{d-separation pruning is exact.} Dropping evidence on blocked trails changed the posterior by at most $4.4 \times 10^{-16}$ in every trial of both regimes. Only about two evidence nodes per trial are active (2.00 in the first regime, 1.84 in the second), so thresholding has little to remove: the mean number kept falls from 2.00 to 0.61, and from 1.84 to 0.59, as $\tau$ rises to 0.5.

\paragraph{Trail thresholding against random pruning.} At every $\tau$ above 0, the mean error of trail-score pruning is below that of random pruning of the same size. In total variation the random baseline is worse by a factor of 1.4 to 13.3 (1.39 to 9.82 on 12-node trees, 1.44 to 13.28 on 14-node trees), and in KL divergence by a factor of 2 to 536. The advantage is largest at small thresholds, where trail scores correctly identify the few weakest trails to drop.

\begin{figure}[ht]
\centering
\includegraphics[width=\linewidth]{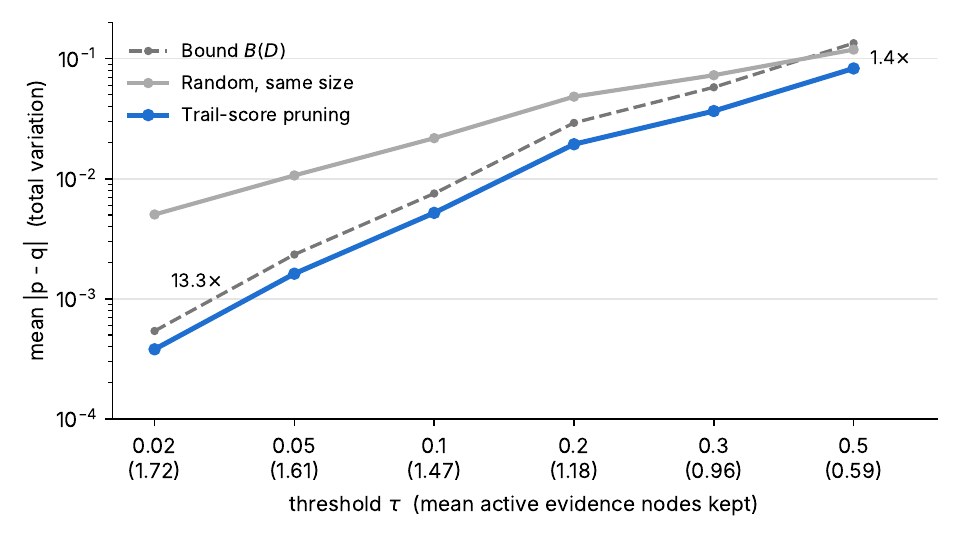}
\caption{Mean posterior error (total variation) against threshold $\tau$ on 14-node trees (3,000 trials, seed 12345), on a log scale. Factors give random error divided by trail-score error. With $\tau=0$ (d-separation only) the error is zero and is not shown.}
\label{fig:err}
\end{figure}

\paragraph{Bound check.} Table~\ref{tab:bound} splits all trial--threshold pairs with $\tau$ above 0 by whether every dropped trail was alone in its branch of $Y$, the condition of Proposition~\ref{prop:bound}. Pairs where nothing was dropped satisfy the condition vacuously.

\begin{table}[ht]
\centering
\caption{Violations of $|p-q| \le B(D)$ by whether the isolation condition holds.}
\label{tab:bound}
\begin{tabular}{llrrr}
\toprule
Regime & Dropped trails & Pairs ($\tau>0$) & Violations & Largest error/bound \\
\midrule
12-node trees & each alone in its branch & 19,681 & 0 & below 1 \\
12-node trees & share a branch & 4,319 & 5 & 1.085 \\
14-node trees & each alone in its branch & 15,606 & 0 & below 1 \\
14-node trees & share a branch & 2,394 & 0 & below 1 \\
\bottomrule
\end{tabular}
\end{table}

There are no violations when Proposition~\ref{prop:bound} applies, and all five violations occur where dropped trails share a branch. In the isolated case the largest ratio is 0.999998 or higher, which confirms the bound is nearly tight for weak trails. Averaged over trials, the bound exceeds the actual error by a factor of 1.4 to 1.9, so it is informative rather than loose.

\section{Benchmark experiments}\label{sec:bench}

Keeping the top half of the active evidence by trail score gives lower error than random pruning on all seven benchmark networks, but it is not clearly better than a mutual-information ranking, and it saves no time.

\paragraph{Setup.} We use seven networks from the bnlearn repository, loaded through pgmpy 1.1.2 (Table~\ref{tab:setup}). They contain colliders and loops, so the theory above does not apply and the trail score is used as a heuristic. Each query draws a random outcome node $Y$ and $m$ random evidence nodes, with values taken from a forward sample of the network. Here $m$ is $0.4(n-1)$, rounded and limited to between 4 and 14. We keep queries with at least four active evidence nodes after exact d-separation. The seed is 12345, and the script is \texttt{bench\_bnlearn.py}. The relevance of an evidence node $e$ is $\mathrm{KL}\bigl(P(Y \mid E)\,\Vert\,P(Y \mid E \setminus e)\bigr)$, computed by exact variable elimination.

\begin{table}[ht]
\centering\small
\caption{Benchmark networks and queries.}\label{tab:setup}
\begin{tabular}{lrrrrr}
\toprule
Network & Nodes & Edges & Evidence per query & Active evidence (mean) & Queries \\
\midrule
Asia & 8 & 8 & 4 & 4.00 & 300 \\
Sachs & 11 & 17 & 4 & 4.00 & 300 \\
Child & 20 & 25 & 8 & 6.28 & 300 \\
Insurance & 27 & 52 & 10 & 9.20 & 300 \\
Alarm & 37 & 46 & 14 & 7.92 & 300 \\
Hailfinder & 56 & 66 & 14 & 9.37 & 100 \\
Andes & 223 & 338 & 14 & 12.29 & 100 \\
\bottomrule
\end{tabular}
\end{table}

\paragraph{Configuration.} All experiments ran on one machine with a 2-core Intel Xeon at 2.1 GHz, Python 3.13, pgmpy 1.1.2 (variable elimination), networkx 3.6.1, numpy 2.5.3, scipy 1.18.1, scikit-learn 1.9.1 and shap 0.53.0, in a single process. The networks are the pgmpy copies of the bnlearn repository models, used unchanged, and evidence values are sampled from the network itself. We re-ran the whole benchmark from the same seed and obtained identical KL values (largest difference $3\times10^{-17}$). Each query is timed once, so the timing tables report the mean and standard deviation over queries, and run-to-run differences in the means were up to 2 times on the smallest networks (for example, exact inference on Asia took 0.79 ms in the first run and 0.66 ms in the second).

\paragraph{Methods.} After exact d-separation the active evidence is ranked, and the top 25\%, 50\% or 75\% is kept. The rankings are: the trail score (ours, the product of edge maximal correlations); two variants from the proposal that use normalized mutual information on edges, composed by product or by minimum; the marginal mutual information $I(Y;e)$, which needs a pairwise inference per evidence node; hop distance from $Y$, nearest first; random choice of the same size, averaged over three draws; and a reference ranking by true leave-one-out relevance. The reference is not an upper bound, because leave-one-out relevance ignores interactions among the kept nodes. Error is the KL divergence from the exact posterior of $Y$ to the pruned one, and we report 50\% kept unless stated.

\paragraph{d-separation is exact.} Across all seven networks the largest KL divergence after d-separation pruning is $2.1\times10^{-16}$, and the reduction never needed a fallback.

\begin{figure}[ht]
\centering
\includegraphics[width=\linewidth]{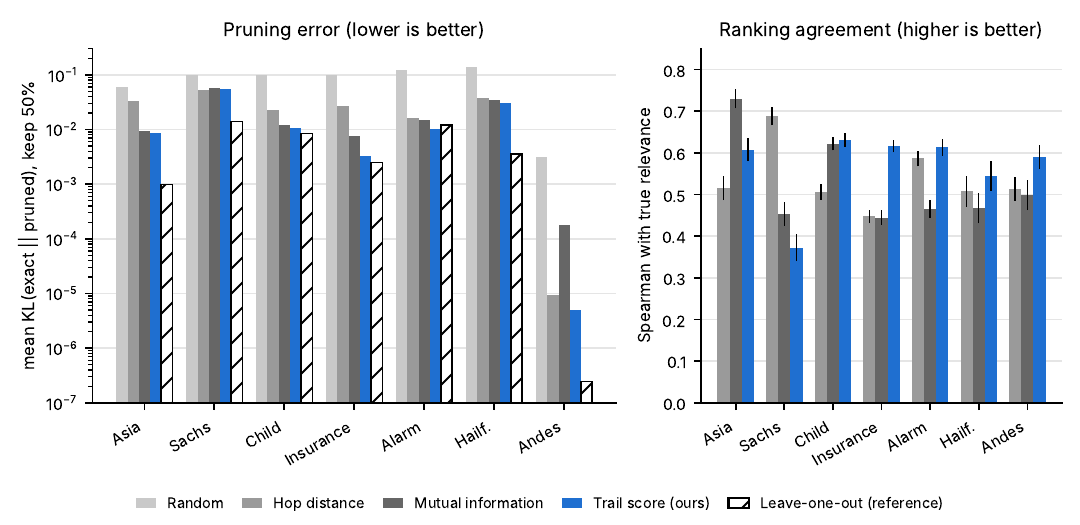}
\caption{Left: mean KL divergence from the exact posterior when keeping 50\% of the active evidence, on a log scale. Right: Spearman correlation between each score and the true relevance, with standard errors. Seed 12345; 300 queries per network, 100 for Hailfinder and Andes.}\label{fig:bench}
\end{figure}

\begin{table}[ht]
\centering\small
\caption{Mean KL divergence from the exact posterior when keeping 50\% of the active evidence.}\label{tab:kl}
\begin{tabular}{lccccc}
\toprule
Network & Random & Hop distance & Mutual information & Trail score & Reference \\
\midrule
Asia & 0.0609 & 0.0326 & 0.0093 & 0.0087 & $9.9\times10^{-4}$ \\
Sachs & 0.0988 & 0.0523 & 0.0573 & 0.0557 & 0.0140 \\
Child & 0.0983 & 0.0226 & 0.0123 & 0.0104 & 0.0085 \\
Insurance & 0.0994 & 0.0273 & 0.0076 & 0.0032 & 0.0025 \\
Alarm & 0.1220 & 0.0164 & 0.0148 & 0.0102 & 0.0119 \\
Hailfinder & 0.1363 & 0.0382 & 0.0344 & 0.0303 & 0.0036 \\
Andes & 0.0031 & $9.3\times10^{-6}$ & $1.8\times10^{-4}$ & $5\times10^{-6}$ & $2.5\times10^{-7}$ \\
\bottomrule
\end{tabular}
\end{table}

\paragraph{Against random and hop distance.} Trail pruning has lower mean error than random pruning on all seven networks (Table~\ref{tab:kl}), by a factor of 1.8 (Sachs) to 31 (Insurance) on the six networks where errors are not negligible. The paired 95\% bootstrap interval of the difference excludes zero on every network, and trail pruning is better than random on 75\% to 95\% of queries on Asia, Sachs, Child, Insurance and Alarm. Against hop distance, trail pruning is clearly better on Asia, Child, Insurance and Hailfinder. The differences on Sachs and Alarm are not clear, and on Andes both errors are below $10^{-5}$.

\paragraph{Against mutual information.} Trail pruning has lower mean error than the mutual-information ranking on all seven networks, but the paired interval excludes zero only narrowly on Asia (a difference of 0.0005), Insurance and Andes, and includes zero on Sachs, Child, Alarm and Hailfinder. We therefore do not claim it is better. The trail score is a closed form in edge strengths computed once per network, while the mutual-information score needs a pairwise inference per evidence node, but the timings below show no speed advantage for our implementation.

\paragraph{Ranking agreement.} Table~\ref{tab:sp} gives the Spearman correlation between each score and the true relevance, averaged over queries. The trail score reaches 0.55 to 0.63 on six networks and 0.37 on Sachs. It is higher than the mutual-information score on Insurance, Alarm, Hailfinder and Andes (by 0.08 to 0.18), about equal on Child, and lower on Asia and Sachs. Hop distance ranks best on Sachs.

\begin{table}[ht]
\centering\small
\caption{Spearman correlation between each score and the true relevance.}\label{tab:sp}
\begin{tabular}{lccc}
\toprule
Network & Hop distance & Mutual information & Trail score \\
\midrule
Asia & 0.52 & 0.73 & 0.61 \\
Sachs & 0.69 & 0.45 & 0.37 \\
Child & 0.51 & 0.62 & 0.63 \\
Insurance & 0.45 & 0.44 & 0.62 \\
Alarm & 0.59 & 0.47 & 0.61 \\
Hailfinder & 0.51 & 0.47 & 0.55 \\
Andes & 0.51 & 0.50 & 0.59 \\
\bottomrule
\end{tabular}
\end{table}

\paragraph{Composition and decisions.} Composing normalized mutual information by minimum gives clearly higher error than the maximal-correlation product on Asia, Child, Insurance and Alarm (the paired intervals exclude zero), and no clear difference on Sachs. On Hailfinder the minimum is lower (0.017 against 0.030), and we did not test that difference. Composing normalized mutual information by product is close to the maximal-correlation product. Keeping half of the active evidence by trail score preserves the most likely state of $Y$ in 95\% to 99\% of queries, against 78\% to 95\% for random pruning.

\paragraph{Cost.} Table~\ref{tab:time} gives the mean time in milliseconds per query with the standard deviation over queries, except the last column, which is paid once per network. Exact inference is fast because pgmpy's variable elimination already discards d-separated and barren nodes. Trail scoring, in unoptimized Python, costs more than the inference it saves on six of the seven networks and about the same on Asia (the saving is the difference between the Exact and Half columns), and pruning to half of the evidence cuts inference time by only 0.1 to 4.7 ms. The standard deviations are large relative to the means, so differences under a factor of two on the small networks are not reliable. We therefore show no speedup. What the score offers on these networks is the trail that explains each piece of evidence, not run time.

\begin{table}[ht]
\centering\small
\caption{Time in milliseconds, mean (standard deviation over queries). Exact: exact inference with all evidence. Trail: trail scoring. MI: mutual-information scoring. Half: inference with half of the active evidence kept. Edges: edge strengths, computed once per network.}\label{tab:time}
\begin{tabular}{lccccc}
\toprule
Network & Exact & Trail & MI & Half & Edges (once) \\
\midrule
Asia & 0.66 (0.14) & 0.08 (0.07) & 0.06 (0.35) & 0.58 (0.14) & 10 \\
Sachs & 0.68 (0.18) & 0.34 (0.40) & 0.08 (0.40) & 0.49 (0.16) & 10 \\
Child & 1.21 (0.32) & 0.93 (1.11) & 0.56 (1.24) & 0.79 (0.26) & 17 \\
Insurance & 2.04 (0.51) & 24.3 (28.0) & 1.61 (3.31) & 1.53 (0.66) & 69 \\
Alarm & 2.12 (0.59) & 2.44 (2.40) & 2.43 (3.54) & 1.63 (0.72) & 48 \\
Hailfinder & 3.04 (1.11) & 6.23 (4.21) & 11.1 (8.84) & 2.35 (1.42) & 84 \\
Andes & 20.4 (5.47) & 89.9 (82.0) & 107 (60.2) & 15.7 (6.22) & 2,990 \\
\bottomrule
\end{tabular}
\end{table}

\section{Learned calibration}\label{sec:learned}

Models trained on trail, graph and information features and tested on networks they never saw do not beat the hand-crafted trail score or mutual information, but trail features carry most of what the models use.

\paragraph{Setup.} The proposal asks whether supervised models can calibrate trail scores better than a hand-crafted formula. For each active evidence node we compute 37 features in four groups: 16 trail features (for example the best product score over active trails with and without colliders, the sum of scores over trails, the length of the shortest active trail, and the number of other evidence nodes lying on trails to $Y$), 13 graph features (hop distance, degrees, betweenness, PageRank and Markov-blanket relations), 6 information features (mutual information with $Y$, entropies and cardinalities) and 2 query features (evidence-set size and number of active nodes). The label is the exact leave-one-out relevance, and the regression target is $\log_{10}(\text{relevance}+10^{-6})$. We built a new query set with the evidence size drawn from 4 to $\min(14,n-1)$: 2,450 queries (400 on each of the five smaller networks, 300 on Hailfinder and 150 on Andes; seed 4242), which give 15,374 rows. The models are ridge regression, a random forest, gradient boosting and a two-layer MLP, with hyperparameters fixed in advance. Each is trained on six networks and tested on the seventh (leave-one-network-out), and its predictions rank the held-out evidence exactly as the other scores do. We designated gradient boosting as the primary model before seeing results.

\begin{figure}[ht]
\centering
\includegraphics[width=\textwidth]{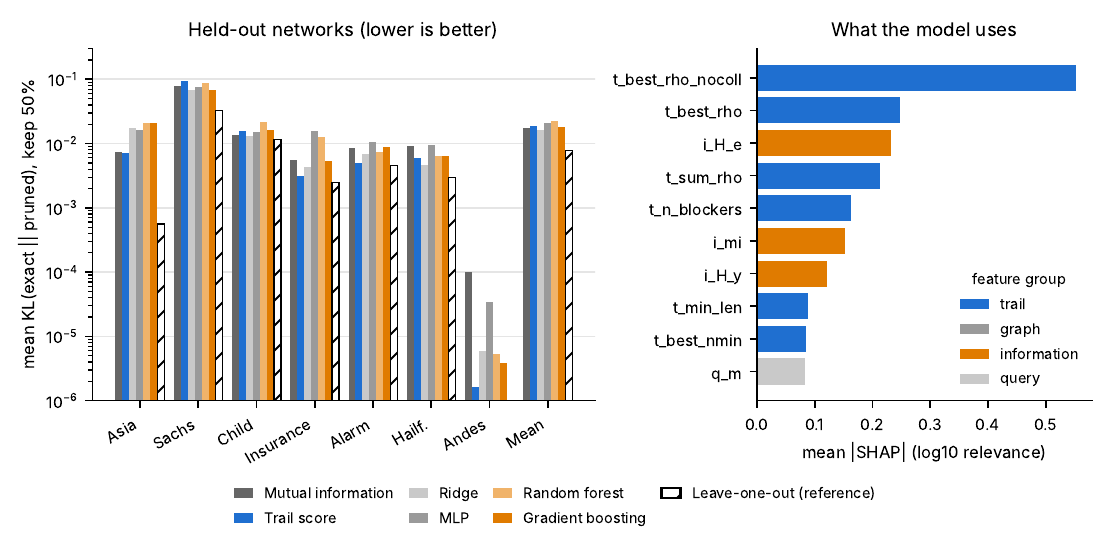}
\caption{Left: mean KL($P(Y\mid E)\,\|\,$pruned) keeping 50\% of the active evidence, per held-out network (log scale). The leave-one-out reference is not an upper bound. Right: ten largest mean absolute SHAP values of the gradient boosting model.}\label{fig:learned}
\end{figure}

\begin{table}[ht]
\centering\small
\caption{Mean KL at 50\% kept for models trained on the other six networks. Mean is the equal-weight average over networks. Ref: leave-one-out reference.}\label{tab:learned}
\begin{tabular}{lrrrrrrr}
\toprule
Network & MI & Trail & Ridge & MLP & RF & GB & Ref \\
\midrule
Asia & .00732 & .00709 & .0175 & .0162 & .0206 & .0210 & .00056 \\
Sachs & .0780 & .0929 & .0674 & .0769 & .0861 & .0689 & .0325 \\
Child & .0137 & .0154 & .0131 & .0151 & .0216 & .0166 & .0116 \\
Insurance & .00554 & .00317 & .00434 & .0159 & .0125 & .00545 & .00247 \\
Alarm & .00858 & .00499 & .00681 & .0107 & .00752 & .00875 & .00458 \\
Hailfinder & .00929 & .00594 & .00462 & .00945 & .00652 & .00636 & .00299 \\
Andes & $10^{-4}$ & $1.7\times10^{-6}$ & $5.9\times10^{-6}$ & $3.5\times10^{-5}$ & $5.3\times10^{-6}$ & $3.9\times10^{-6}$ & $8.6\times10^{-7}$ \\
\midrule
Mean & .0175 & .0185 & .0163 & .0206 & .0221 & .0181 & .00781 \\
\bottomrule
\end{tabular}
\end{table}

\paragraph{Results.} Averaged over the seven networks with equal weight, the mean KL at 50\% kept is 0.0163 for ridge regression, 0.0175 for mutual information, 0.0181 for gradient boosting, 0.0185 for the trail score, 0.0206 for the MLP and 0.0221 for the random forest (Table~\ref{tab:learned}). Random pruning gives 0.083 and the leave-one-out reference 0.0078. The paired 95\% intervals for the difference between gradient boosting and the trail score ($-0.0004$, interval $-0.0042$ to $+0.0038$) and between gradient boosting and mutual information ($+0.0006$, interval $-0.0029$ to $+0.0046$) include zero, as do those for ridge regression. The random forest and the MLP are worse than mutual information (the intervals exclude zero, narrowly for the MLP). On single networks, gradient boosting beats the trail score clearly only on Sachs (by 0.024) and is worse on Alarm (by 0.004).

The Spearman correlation with the true relevance, averaged over networks, is 0.55 for ridge regression and gradient boosting, 0.54 for the random forest, 0.53 for both the trail score and mutual information, and 0.43 for the MLP. Keeping half of the evidence preserves the most likely state of $Y$ in 97\% of queries for gradient boosting and for the trail score alike. We therefore find no support for the hypothesis that a learned model beats the hand-crafted score. All rankings remain well short of the reference: gradient boosting is 0.010 above it (interval 0.006 to 0.015).

\paragraph{Do trail features add value?} We ran gradient boosting on subsets of the feature groups (Table~\ref{tab:ablate}).

\begin{table}[ht]
\centering\small
\caption{Gradient boosting on feature subsets, leave-one-network-out.}\label{tab:ablate}
\begin{tabular}{lcc}
\toprule
Features & Mean KL, 50\% kept & Spearman \\
\midrule
All 37 & 0.0181 & 0.55 \\
Trail + query & 0.0192 & 0.53 \\
Graph + query & 0.0340 & 0.36 \\
Information + query & 0.0187 & 0.47 \\
Graph + information + query & 0.0200 & 0.46 \\
Trail + graph + query & 0.0189 & 0.52 \\
Trail + information + query & 0.0166 & 0.55 \\
\bottomrule
\end{tabular}
\end{table}

Trail features alone are nearly as good as all features (0.0192 against 0.0181) and far better than graph features alone (0.0340). The paired interval for trail plus query against graph plus query is $-0.0148$ ($-0.0193$ to $-0.0099$). Adding trail features to graph features lowers the error by 0.0151 (interval $-0.0193$ to $-0.0112$). Adding them to graph and information features lowers it by 0.0018, but that interval ($-0.0040$ to $+0.00004$) just reaches zero. Trail features therefore add clear predictive value over generic graph features, and a smaller, unconfirmed amount over information features.

\paragraph{What the model uses.} SHAP values for a gradient boosting model fitted to all rows show that trail features account for 64\% of the mean absolute attribution, information features 23\%, graph features 9\% and query features 4\%. Table~\ref{tab:shap} lists the ten largest. Directions are conditional on the other features, so the sign for the normalized-MI score, which overlaps the product scores, is not interpretable alone.

\begin{table}[ht]
\centering\small
\caption{Ten largest mean absolute SHAP values.}\label{tab:shap}
\begin{tabular}{llcl}
\toprule
Feature & Meaning & SHAP & Higher value \\
\midrule
\texttt{t\_best\_rho\_nocoll} & Best product score, active trails without collider & 0.55 & raises \\
\texttt{t\_best\_rho} & Best product score over active trails & 0.25 & raises \\
\texttt{i\_H\_e} & Entropy of the evidence node & 0.23 & raises \\
\texttt{t\_sum\_rho} & Sum of product scores over active trails & 0.21 & raises \\
\texttt{t\_n\_blockers} & Other evidence nodes on trails to $Y$ & 0.16 & lowers \\
\texttt{i\_mi} & Mutual information with $Y$ & 0.15 & raises \\
\texttt{i\_H\_y} & Entropy of $Y$ & 0.12 & raises \\
\texttt{t\_min\_len} & Length of the shortest active trail & 0.09 & lowers \\
\texttt{t\_best\_nmin} & Best minimum normalized-MI score & 0.09 & lowers \\
\texttt{q\_m} & Number of evidence nodes & 0.08 & lowers \\
\bottomrule
\end{tabular}
\end{table}

\paragraph{Cost.} Feature extraction takes 0.3 to 160 ms per query, more than exact inference on five of the seven networks (0.7 to 20 ms). Prediction takes 0.01 to 0.02 ms. Training, summed over the seven folds, took 0.3 s for ridge regression, 12 s for the MLP, 56 s for gradient boosting and 122 s for the random forest. Learning therefore adds a one-off training cost and does not make scoring cheaper.

\section{Related work}

Exact structural pruning is well established, and our contribution is an approximate second stage that uses the strength of a trail rather than its existence.

\paragraph{Relevance and pruning.} Pearl (1988) introduced d-separation, and Geiger, Verma and Pearl (1990) gave an algorithmic characterization of the independencies it encodes. Shachter's Bayes-Ball algorithm (1998) finds the irrelevant and non-requisite nodes for a query by a graph traversal in linear time. Baker and Boult (1990) and Lin and Druzdzel (1997) prune networks before inference to reduce computation. These methods are exact. Ours keeps them as the first stage and adds a threshold on trail strength.

\paragraph{Approximation by removing weak dependences.} Kj{\ae}rulff (1994) removes links from the moralized graph when their conditional mutual information is small, which splits large cliques, and bounds the error through the KL divergence of the clique potentials. He reports state-space reductions of 34\% to 97\% on real networks. Choi, Chan and Darwiche (2005) delete edges, replacing the deleted parent by its posterior given the evidence, and bound the KL error using the entropy of the deleted parent. Both methods change the network to make inference cheaper, and both need inference-based quantities to choose what to remove. Our method leaves the network unchanged and prunes the evidence by a threshold on a closed-form trail score. The approaches are complementary, and we have not compared them.

\paragraph{Sensitivity and strength of influence.} Chan and Darwiche (2002) study how a query result changes when network parameters change. That asks which parameters matter, whereas we ask which evidence can be omitted. We use maximal correlation as the edge strength because it composes by product along a trail. Link-strength and connection-strength measures between nodes, mostly based on mutual information, have been proposed (Boerlage, 1994; Ebert-Uphoff, 2007). We have not checked whether these compose along trails as ours does, and the experiments above include a mutual-information-only ranking. The GeNIe software also draws a ``strength of influence'' on each arc from the child's conditional probability table, but its documentation describes it as a tool for visual analysis and does not say that it composes along paths or prunes evidence.

\paragraph{Dependence along chains.} The decay of dependence along a Markov chain is classical: the data processing inequality (Cover and Thomas, 2006) and the multiplicative contraction of maximal correlation (R\'enyi, 1959; Witsenhausen, 1975). We use these facts as a pruning score and test the resulting error bound against exact inference.

\paragraph{Exact and approximate inference.} Junction-tree inference (Lauritzen and Spiegelhalter, 1988) is exact but exponential in treewidth. Loopy belief propagation (Murphy, Weiss and Jordan, 1999) and sampling are approximate. Koller and Friedman (2009) and Darwiche (2009) cover both. These are the baselines for the loopy-network evaluation planned as future work.

\paragraph{What the trail score adds.} The closest existing tools answer different questions. d-separation and Bayes-Ball (Shachter, 1998; Meekes, Renooij and van der Gaag) say whether evidence can matter, not how much. Mutual information and the link-strength measures (Boerlage, 1994; Ebert-Uphoff, 2007) measure how much a pair of nodes or a single edge carries, but they are not defined for the evidence that remains after d-separation, in the context of the other observations. The trail score does three things that these do not. It is defined on the active trails of the query, so it changes with the evidence set. It composes along a trail by a product, and in collider-free binary networks the product equals the exact endpoint correlation (Proposition~1). And it names the trail that explains each kept or dropped item. What it does not add, in our experiments, is accuracy: it is statistically indistinguishable from a mutual-information ranking on the mean error, and it carries an error bound only in the collider-free case under the isolation condition.

\paragraph{Relevance of evidence.} Meekes, Renooij and van der Gaag define sets of given evidence whose value or status may change a posterior, and compute them exactly with Bayes-Ball. Their method prunes by structure only and does not rank the remaining evidence by strength. Champion and Elkan rank variables, not evidence items, by the symmetric KL divergence between posteriors under two evidence sets, and use the ranking to declutter a visualization rather than to prune.

\paragraph{Learned inference.} Douglas et al.\ train one neural network on samples from a Bayesian network, with random masking of observed nodes, to approximate marginals under any evidence. It amortizes inference and does not rank or prune evidence. Our learned-calibration experiment instead predicts per-evidence relevance from trail features.

\paragraph{Still to complete.} Our search covered the sources cited here and was not systematic. A search of Google Scholar, Semantic Scholar and arXiv by the authors, and a review of graph neural networks for probabilistic reasoning, remain to be done. To our knowledge we found no work that scores evidence by the product of edge strengths along active trails, but we cannot rule it out.

\section{Discussion, limitations and future work}

In collider-free binary networks, trail strength is an exact and cheap measure of how much evidence tells us about the outcome, and thresholding it removes evidence with small, boundable error. The synthetic results support the proposal's second hypothesis (pruning keeps the error small) and, under an isolation condition, part of its fourth (a provable bound on trees). The benchmarks bear on the first hypothesis, since trail scores agree only moderately with true relevance (Spearman 0.37 to 0.63), and on the second, since pruning beats random but not clearly a mutual-information ranking. The learned-calibration experiment tests the third (a learned model beating the hand-crafted score) and does not support it. The loopy-graph part of the fourth, a bound on general networks, is still open.

\paragraph{Limitations.}
\begin{itemize}[leftmargin=*]
\item The theory assumes binary variables, uniform marginals, symmetric edges and no colliders. On general networks the trail score is a heuristic with no error bound.
\item In the synthetic experiments only about two evidence nodes per trial remain active after d-separation, and the only baseline is random pruning.
\item Proposition~\ref{prop:bound} needs each dropped trail to be alone in its branch. With shared branches the bound failed in 5 of 4,319 trial--threshold pairs on 12-node trees, by up to 8.5\%, and we have no valid replacement bound.
\item On the benchmarks, trail pruning cannot be told apart from a mutual-information ranking on most networks, and its agreement with true relevance is moderate (Spearman 0.37 to 0.63).
\item Scoring costs more than the inference it saves on six of the seven benchmark networks, so we show no speedup. Timings come from one run on one 2-core machine, with a standard deviation across queries that is large relative to the mean.
\item Queries are random draws with at least four active evidence nodes, from one seed, 100 to 300 per network. Trails are capped at 7 edges and 300 per evidence node, and ground truth needs exact inference, which limits the networks we can evaluate.
\item Learned models were trained on six networks and tested on a seventh with fixed hyperparameters and one seed. Their paired differences from the trail score and mutual information are within noise, so a different model class or more training networks could change the outcome.
\end{itemize}

\paragraph{Future work.}
\begin{itemize}[leftmargin=*]
\item \textbf{Shared branches.} Derive a bound that accounts for dependence among dropped trails that share a branch of $Y$.
\item \textbf{General networks.} Derive error bounds for colliders and loopy graphs, where the product score is only a heuristic.
\item \textbf{Stronger baselines and cost.} Compare with loopy belief propagation, likelihood weighting and edge deletion, and test networks where exact inference is slow enough that pruning saves time.
\item \textbf{Better learned models.} Try models that use the graph directly (for example graph neural networks), more training networks, and a loss aimed at the ranking rather than at relevance values.
\end{itemize}

\section*{Reproducibility and code availability}
All results were produced by the scripts below, each with a fixed seed.
\begin{itemize}[leftmargin=*,itemsep=1pt]
\item \texttt{bn\_trail\_sim.py}: synthetic results (seed 12345, about ten seconds).
\item \texttt{verify\_prop3.py}, \texttt{bn\_prop3\_exhaustive.py}: exhaustive checks of Proposition~\ref{prop:bound}.
\item \texttt{bench\_bnlearn.py}: benchmark (seed 12345; 300 queries on Asia, Sachs, Child, Insurance and Alarm, 100 on Hailfinder and Andes).
\item \texttt{learn\_build.py}, \texttt{learn\_eval.py}, \texttt{learn\_analyze.py}: learned calibration (seed 4242 for the queries).
\end{itemize}
Software versions are given in the benchmark configuration. Networks are loaded with pgmpy's example-model loader, with no preprocessing. The code, the raw result files and a README with the exact commands are in the public repository \url{https://github.com/sahil1290-cm/trail-weighted-bn}.


\begin{thebibliography}{99}\small
\setlength{\itemsep}{2pt}
\bibitem{} Baker, M. and Boult, T. E. (1990). Pruning Bayesian networks for efficient computation. \emph{Proceedings of UAI}, 225--232.
\bibitem{} Chan, H. and Darwiche, A. (2002). When do numbers really matter? \emph{Journal of Artificial Intelligence Research}, 17, 265--287.
\bibitem{} Choi, A., Chan, H. and Darwiche, A. (2005). On Bayesian network approximation by edge deletion. \emph{Proceedings of UAI}, 128--135.
\bibitem{} Cooper, G. F. (1990). The computational complexity of probabilistic inference using Bayesian belief networks. \emph{Artificial Intelligence}, 42(2--3), 393--405.
\bibitem{} Cover, T. M. and Thomas, J. A. (2006). \emph{Elements of Information Theory}, 2nd edition. Wiley.
\bibitem{} Darwiche, A. (2009). \emph{Modeling and Reasoning with Bayesian Networks}. Cambridge University Press.
\bibitem{} Ebert-Uphoff, I. (2007). Measuring connection strengths and link strengths in discrete Bayesian networks. Technical report, Georgia Institute of Technology, 29 January 2007.
\bibitem{} Geiger, D., Verma, T. and Pearl, J. (1990). Identifying independence in Bayesian networks. \emph{Networks}, 20(5), 507--534.
\bibitem{} Kj{\ae}rulff, U. (1994). Reduction of computational complexity in Bayesian networks through removal of weak dependences. \emph{Proceedings of UAI}, 374--382.
\bibitem{} Koller, D. and Friedman, N. (2009). \emph{Probabilistic Graphical Models: Principles and Techniques}. MIT Press.
\bibitem{} Lauritzen, S. L. and Spiegelhalter, D. J. (1988). Local computations with probabilities on graphical structures and their application to expert systems. \emph{Journal of the Royal Statistical Society, Series B}, 50(2), 157--224.
\bibitem{} Lin, Y. and Druzdzel, M. J. (1997). Computational advantages of relevance reasoning in Bayesian belief networks. \emph{Proceedings of UAI}, 342--350.
\bibitem{} Boerlage, B. (1994). Link strength in Bayesian networks. Technical report TR-94-17, University of British Columbia.
\bibitem{} Murphy, K. P., Weiss, Y. and Jordan, M. I. (1999). Loopy belief propagation for approximate inference: an empirical study. \emph{Proceedings of UAI}.
\bibitem{} Champion, C. and Elkan, C. Visualizing the consequences of evidence in Bayesian networks. arXiv:1707.00791.
\bibitem{} Douglas, L., Zarov, I., Gourgoulias, K., Lucas, C., Hart, C., Baker, A., Sahani, M., Perov, Y. and Johri, S. A universal marginalizer for amortized inference in generative models. arXiv:1711.00695.
\bibitem{} Meekes, M., Renooij, S. and van der Gaag, L. C. Relevance of evidence in Bayesian networks. Utrecht University (ECSQARU 2015 manuscript).
\bibitem{} Pearl, J. (1988). \emph{Probabilistic Reasoning in Intelligent Systems}. Morgan Kaufmann.
\bibitem{} R\'enyi, A. (1959). On measures of dependence. \emph{Acta Mathematica Academiae Scientiarum Hungaricae}, 10, 441--451.
\bibitem{} Shachter, R. D. (1998). Bayes-Ball: the rational pastime. \emph{Proceedings of UAI}.
\bibitem{} Witsenhausen, H. S. (1975). On sequences of pairs of dependent random variables. \emph{SIAM Journal on Applied Mathematics}, 28(1), 100--113.
\end{thebibliography}
\end{document}